\documentclass[sigconf]{acmart}

\usepackage{microtype}
\usepackage{graphicx}
\usepackage{subfigure}
\usepackage{booktabs}
\usepackage{multirow}
\usepackage{algorithm}
\usepackage{algorithmic}

\usepackage{amsmath}
\usepackage{mathtools}
\usepackage{amsthm}
\usepackage[capitalize,noabbrev]{cleveref}
\usepackage[inline]{enumitem}

\theoremstyle{plain}
\newtheorem{theorem}{Theorem}[section]

\theoremstyle{definition}

\theoremstyle{remark}

\usepackage{enumitem}

\newif\ifincludeappendix
\includeappendixtrue

\begin{abstract}
As long-context language modeling becomes increasingly important, the cost of maintaining and attending to large Key/Value (KV) caches grows rapidly, becoming a major bottleneck in both training and inference. While prior works such as Multi-Query Attention (MQA) and Multi-Latent Attention (MLA) reduce memory by sharing or compressing KV features, they often trade off representation quality or incur runtime overhead. We propose Memory-Keyed Attention (MKA), a hierarchical attention mechanism that integrates multi-level KV caches—local, session, and long-term—and learns to route attention across them dynamically. We further introduce Route-Fused MKA (FastMKA), a broadcast-routed variant that fuses memory sources before attention computation for enhanced efficiency. Experiments on different sequence lengths show that FastMKA achieves a favorable accuracy-efficiency trade-off: comparable perplexity to MLA while achieving up to 5$\times$ faster training throughput and 1.8$\times$ lower evaluation latency. These results highlight MKA as a practical and extensible framework for efficient long-context attention.
\end{abstract}

\begin{document}

\copyrightyear{2026}
\acmYear{2026}
\setcopyright{cc}
\setcctype{by}
\acmConference[CF '26]{Proceedings of the 23rd ACM International Conference on Computing Frontiers}{May 19--21, 2026}{Catania, Italy}
\acmBooktitle{Proceedings of the 23rd ACM International Conference on Computing Frontiers (CF '26), May 19--21, 2026, Catania, Italy}
\acmDOI{10.1145/3801487.3801812}
\acmISBN{979-8-4007-2568-5/2026/05}

\title{MKA: Memory-Keyed Attention for Efficient Long-Context Reasoning}

\author{Dong Liu}

\affiliation{
  \institution{University of California, Los Angeles}
  \city{Los Angeles}
  \state{California}
  \country{USA}
}


\email{pikeliu@ucla.edu}

\author{Yanxuan Yu}
\affiliation{%
  \institution{Columbia University}
  \city{New York}
  \state{New York}
  \country{USA}
}
\email{yy3523@columbia.edu}

\author{Ben Lengerich}

\affiliation{
  \institution{University of Wisconsin-Madison}
  \city{Madison}
  \state{Wisconsin}
  \country{USA}
}


\email{lengerich@wisc.edu}

\author{Ying Nian Wu}
\affiliation{
  \institution{University of California, Los Angeles}
  \city{Los Angeles}
  \state{California}
  \country{USA}
}
\email{ywu@stat.ucla.edu}

\maketitle

\section{Introduction}

Large Language Models (LLMs) have rapidly advanced and are now capable of processing context lengths up to 128K or even 1M tokens~\cite{peng2023yarn,jiang2024world,tworkowski2023focused}. This unprecedented expansion of context length unlocks new applications such as long-document reasoning and multi-turn dialogue with persistent memory. However, supporting long contexts during inference introduces severe memory and latency bottlenecks, particularly due to the scaling cost of attention with Key/Value (KV) caches.

In self-attention, each newly generated token must attend to all past tokens stored in the KV cache. This results in quadratic compute and increasingly large memory reads. For instance, with a context length of $32$K, we measure that the KV cache for a LLaMA-7B model~\cite{touvron2023llama} occupies approximately 15.8GB on an NVIDIA A800 GPU and requires 11.3ms to access during decode, which dominates more than 50\% of the inference latency. This cost limits the throughput of LLMs in production environments.

Several methods have attempted to address the inefficiencies in attention computation. Multi-Query Attention (MQA)~\cite{shazeer2019fast} and Grouped-Query Attention (GQA)~\cite{rajbhandari2023gqa} reduce KV duplication by sharing them across heads. Multi-Latent Attention (MLA)~\cite{deepseek2024v2} compresses the KV cache via low-rank factorization. However, these methods sacrifice representation fidelity or lack the flexibility to differentiate among memory types.

\begin{figure}
    \centering
    \includegraphics[width=0.4\linewidth]{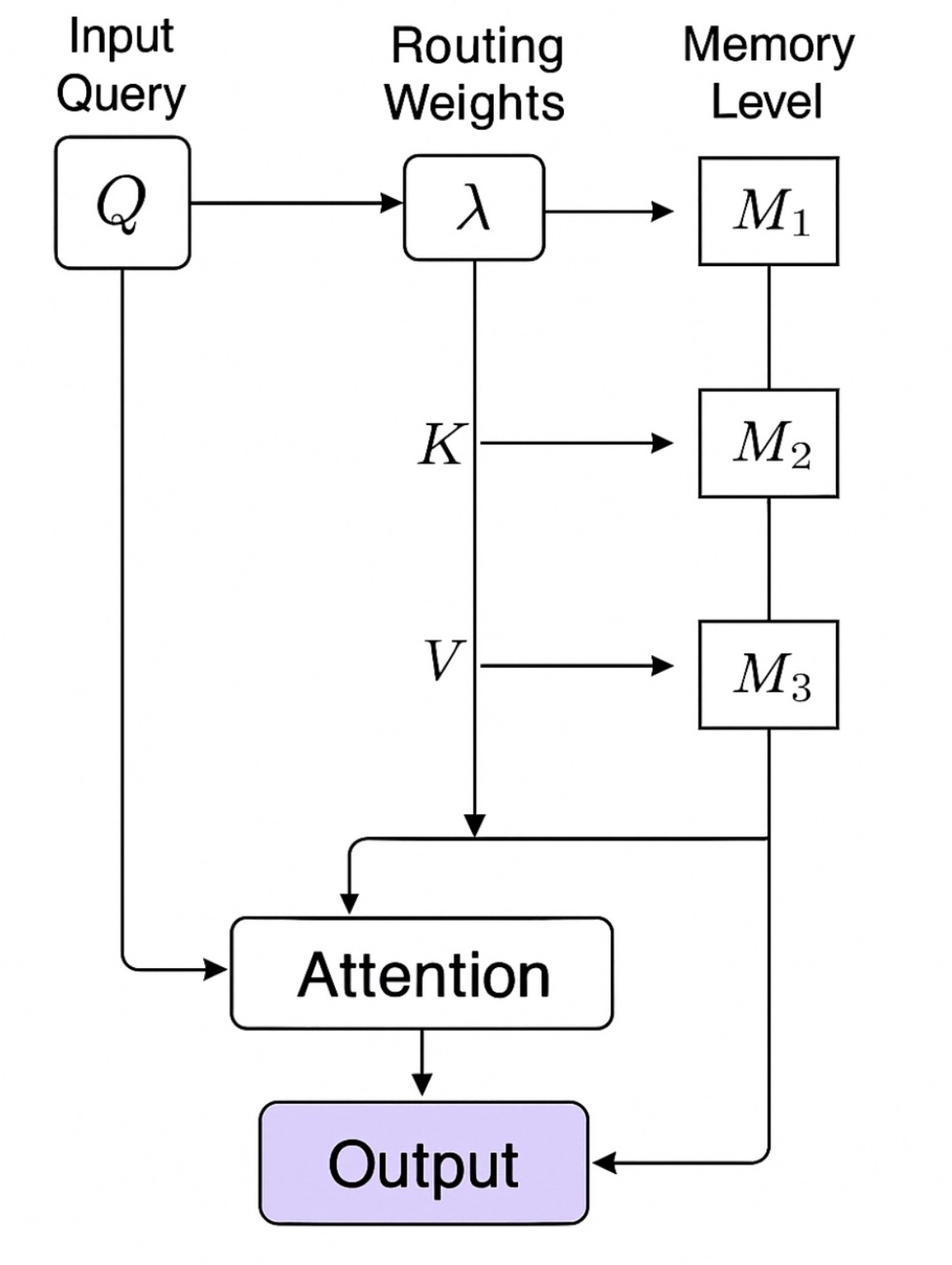}
    \caption{MKA hierarchical memory design with three memory levels (L1: local, L2: session, L3: long-term) and dynamic routing gates.}
    \label{fig:MKA}
\end{figure}

In this paper, we propose \textbf{Memory-Keyed Attention (MKA)}, a novel attention mechanism that hierarchically organizes memory into three levels—\textbf{local (L1)}, \textbf{session (L2)}, and \textbf{long-term (L3)}—and dynamically learns how to route each query token across these sources. As illustrated in figure \ref{fig:MKA}, MKA leverages lightweight routing gates that modulate attention over heterogeneous memory types. This design enables context-aware attention with significantly lower memory bandwidth and improved token reuse.

To ensure scalability, MKA adopts a block-wise softmax implementation inspired by FlashAttention~\cite{dao2022flashattention} and supports GPU-efficient kernel fusion. Moreover, long-term memory (L3) is indexed via semantic chunking and vectorized hashing, allowing the model to recall relevant historical content efficiently. Our experiments show that MKA can reduce training time \& evaluation latency significantly compared to MHA, MQA, GQA, MLA. Our contribution can be introduced as follows:


\begin{itemize}
    \setlength{\itemsep}{-3pt}
    \item We identify the inefficiencies of existing long-context attention mechanisms and propose a hierarchical memory system tailored for attention routing.
    \item We introduce Memory-Keyed Attention (MKA), a dynamic routing mechanism across local, session, and long-term memory with hardware-friendly implementations, we further accelerate MKA training and inference by designing broadcast routing (FastMKA).
    \item We show that FastMKA outperforms prior methods in both accuracy and efficiency, making it suitable for high-throughput LLM inference with long-context inputs.
\end{itemize}


\section{Related Work}

\subsection{Long-context Attention Mechanisms}

As LLMs are scaled to handle inputs of 32K, 128K, or even 1M tokens~\cite{peng2023yarn,shukor2025scaling}, the challenge of performing efficient attention over long contexts has become a major bottleneck. A key issue lies in the size and bandwidth cost of Key/Value (KV) caches. Many efforts have been proposed to reduce attention overhead in this setting. FlashAttention~\cite{dao2022flashattention} improves memory throughput by computing softmax in a tiled, IO-aware fashion. Multi-Query Attention (MQA)~\cite{shazeer2019fast} and Grouped-Query Attention (GQA)~\cite{rajbhandari2023gqa} reduce KV duplication by sharing across heads, thus reducing cache size to $\frac{1}{H}$ and $\frac{g}{H}$, respectively, where $H$ is the number of attention heads and $g$ is the number of KV groups ($g < H$).

More recent works like Multi-Latent Attention (MLA)~\cite{deepseek2024v2} compress attention by factorizing the KV memory into a smaller latent space. Token-eviction approaches such as DynamicKV~\cite{liu2024dynamickv} and PyramidKV~\cite{cai2024pyramidkv} adaptively prune the KV cache based on attention importance, while InfiniGen~\cite{lee2024infinigen} manages the KV cache dynamically during generation. Infinite Retrieval~\cite{zhang2024infiniteretrieval} enhances long-context processing through attention-guided retrieval within a fixed cache budget. However, these approaches are limited to static memory layouts or rely on token eviction, which discards information irreversibly. In contrast, our proposed \textbf{MKA} generalizes these ideas through a \textit{hierarchical memory design} with dynamic routing, enabling scalable and query-aware memory access across different time horizons without evicting tokens.

\subsection{Hierarchical and External Memory Systems}

Incorporating multiple memory timescales has long been a goal in neural network design. Early memory-augmented models such as Memory Networks~\cite{weston2014memory} and Differentiable Neural Computers~\cite{graves2016hybrid} support explicit memory reads and writes, but are difficult to scale to large language models. More practical approaches such as Transformer-XL~\cite{dai2019transformer} cache segments of hidden states to extend effective context length, while Compressive Transformers~\cite{raecompressive2019} introduce a two-level memory with lossy compression.

Retrieval-Augmented Generation (RAG)~\cite{lewis2020retrieval} and RETRO~\cite{borgeaud2022improving} retrieve similar documents from a corpus during inference, effectively serving as external long-term memory. However, these methods rely on external index structures and are typically applied at the sequence level. By contrast, \textbf{MKA} integrates \textit{internal multi-level memory}—local, session, and long-term—within the model, and learns per-token routing to dynamically select which level to attend to during each step of generation.

\subsection{Dynamic Routing and Query-aware Attention}

Routing-based attention has recently gained interest due to its potential for conditional computation. Sparse Mixture-of-Experts (MoE) models~\cite{lepikhin2020gshard, fedus2022switch} route tokens through different feedforward networks, though not through memory. Routing Transformers~\cite{roy2021efficient} cluster tokens before performing sparse attention within groups, but rely on static clustering and do not model multiple memory levels. Query-aware caching methods such as TOVA~\cite{oren2024transformers} and Quest~\cite{tang2024quest} focus on loading the most relevant KV cache pages based on current queries, but typically discard unused tokens.

A complementary line of work, PERK~\cite{chen2026perk}, treats long-context reasoning as parameter-efficient test-time learning, storing context in model weights rather than KV cache. In contrast, \textbf{MKA} retains the full cache but dynamically routes attention across three memory tiers. Our approach supports query-dependent access, soft selection (via learned weights), and slot reuse—achieving high memory efficiency without sacrificing accuracy.

\section{Motivation: Beyond MLA and MHA}
The design of MLA achieves KV compression through low-rank projections and shared K/V structures across heads. However, it does not explicitly support heterogeneous memory sources, nor does it allow selective reuse of memory slots. MKA extends this direction by introducing 3-level memory:

\begin{itemize}
    \item \textbf{L1: Local cache} for current window tokens (standard causal attention).
    \item \textbf{L2: Session memory}, derived from low-rank summary or gated history.
    \item \textbf{L3: Long-term memory}, explicitly indexed and retrieved from a dynamic memory bank.
\end{itemize}

To select between memory sources, MKA employs a \textbf{routing gate} $\lambda_\ell \in \mathbb{R}^3$ that is dynamically learned per query token.

\section{Methodology}

In this section, we present the design of \textbf{Memory-Keyed Attention (MKA)}. We begin by analyzing the bottlenecks of long-context inference, then introduce the hierarchical MKA structure, and follow with symbolic and CUDA-style pseudocode, tiled execution, and a theoretical formulation that supports recursive attention computation with memory efficiency. The detailed pseudocode can be found at appendix.

\begin{figure}
    \centering
    \includegraphics[width=0.5\linewidth]{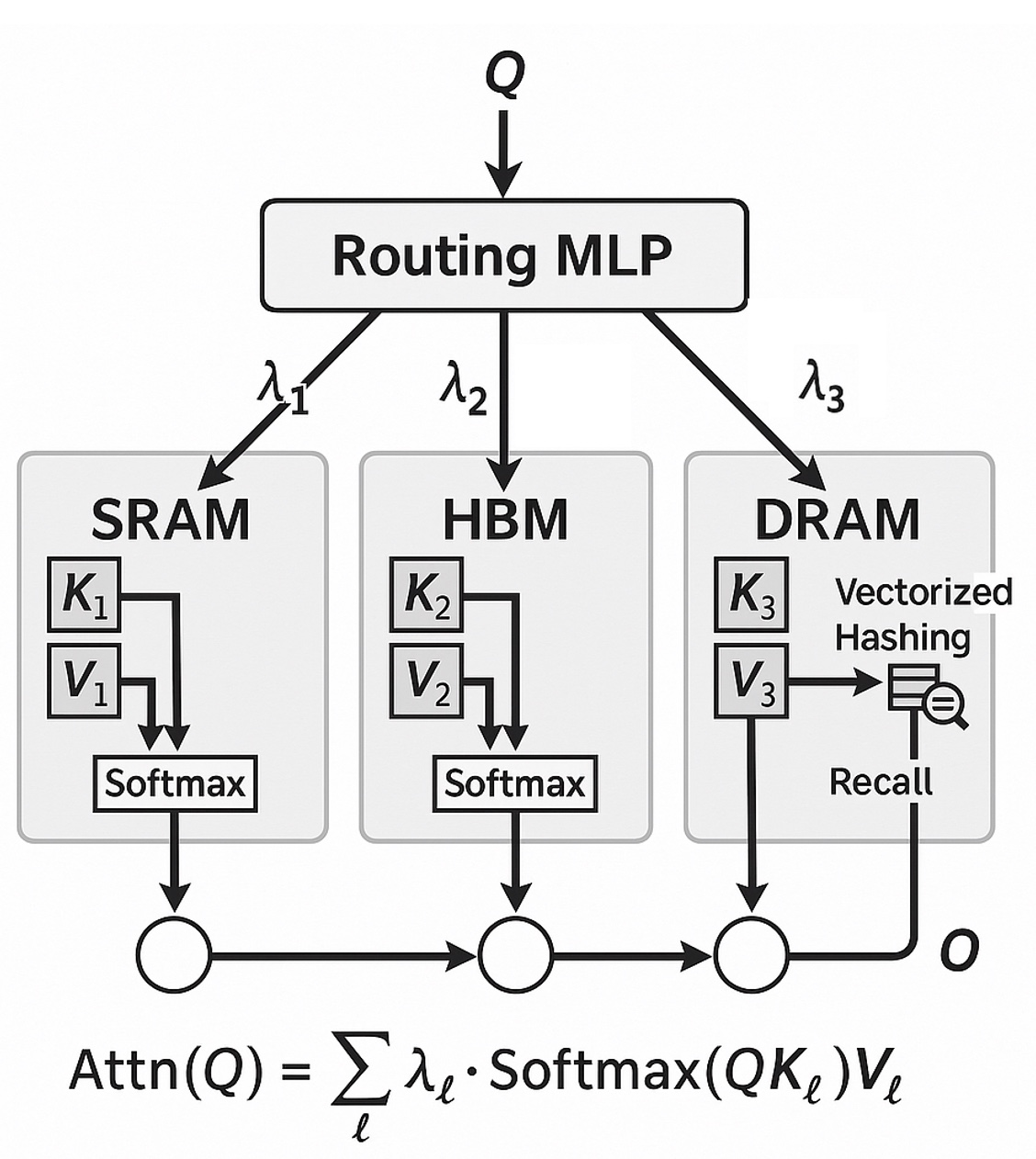}
    \caption{Hierarchical Memory-Keyed Attention (MKA) with Multi-Level Routing: illustrates the three-tier memory architecture (L1/L2/L3) and query-based routing mechanism.}
    \label{fig:routing_q}
\end{figure}

\begin{algorithm}[h]
\caption{Symbolic MKA: Memory-Keyed Attention with Hierarchical Routing}
\label{alg:mka_routing}
\begin{algorithmic}[1]
\REQUIRE Input $X \in \mathbb{R}^{B \times S \times D}$ (batch size $B$, sequence length $S$, model dimension $D$)
\ENSURE Output $O \in \mathbb{R}^{B \times S \times D}$, KV cache $(K, V)$

\STATE $q \gets X W_q$, \quad $q \in \mathbb{R}^{B \times S \times D}$
\STATE $q_h \gets \text{reshape}(q) \rightarrow \mathbb{R}^{B \times H \times S \times d_h}$

\STATE \COMMENT{Define memory levels (causal)}
\STATE $M_1 \gets X$ \COMMENT{L1: Local memory (current tokens)}
\STATE \COMMENT{L2: Causal session summary (prefix mean or EMA)}
\FOR{$t = 1$ to $S$}
    \STATE $M_2[t] \gets \mathrm{Summary}(X[:, 1:t, :])$ \COMMENT{Causal prefix summary}
\ENDFOR
\STATE $M_3 \gets \mathbf{0} \in \mathbb{R}^{B \times S \times D}$ \COMMENT{L3: Long-term memory (optional retrieval)}

\STATE $\lambda \gets \text{softmax}(\text{MLP}(q)) \in \mathbb{R}^{B \times S \times 3}$ \COMMENT{Routing computed per-token, per-layer}

\FOR{$\ell = 1$ to $3$}
    \STATE $k_\ell \gets M_\ell W_k$, \quad $v_\ell \gets M_\ell W_v$
    \STATE $k_\ell^h \gets \text{reshape}(k_\ell)$, \quad $v_\ell^h \gets \text{reshape}(v_\ell)$
    \STATE $a_\ell \gets \text{softmax}(q_h \cdot {k_\ell^h}^\top) \cdot v_\ell^h$
\ENDFOR

\STATE $O_h \gets \sum_{\ell=1}^{3} \lambda_\ell \odot a_\ell$ \COMMENT{Fused multi-memory output}
\STATE $O \gets \text{reshape}(O_h) \cdot W_o$
\STATE $K \gets [K \| k_1^h]$, \quad $V \gets [V \| v_1^h]$ \COMMENT{Update cache}
\STATE \textbf{return} $(O, (K, V))$
\end{algorithmic}
\end{algorithm}

\subsection{Route-Fused MKA (FastMKA): Causal Route-Fusion with KV-Cache}

While MKA enables dynamic query-aware access to multi-level memories (L1, L2, L3), it requires repeated projection and attention computation over each memory source, leading to non-trivial overhead. To alleviate this, we propose \textbf{Route-Fused MKA (FastMKA)}, a simplified yet effective variant that performs \emph{route-fusion}—a token-wise soft fusion of hierarchical memory levels \emph{before} attention—thereby avoiding multiple attention paths and significantly reducing runtime cost.

In Route-Fused MKA, the local, session, and long-term memory representations are fused via learned routing weights before a single key-value projection. The resulting fused memory is used to generate routed keys and values, which are cached and used in attention computation over the full context. This mechanism retains the benefits of multi-level memory while incurring only one round of attention computation, making Route-Fused MKA highly efficient and compatible with standard Transformer pipelines. Importantly, Route-Fused MKA caches fused (routed) keys/values instead of raw token KV, enabling efficient long-context attention with reduced memory bandwidth.



\begin{algorithm}[h]
\caption{FastMKA: Route-Fused MKA (RF-MKA)}
\label{alg:routefused_mka}
\begin{algorithmic}[1]
\REQUIRE Input $X$; proj. matrices $W_q, W_k, W_v, W_o$; routing MLP; opt. $\mathrm{Retrieve}(\cdot)$; opt. cache $(K_{\mathrm{prev}}, V_{\mathrm{prev}})$
\ENSURE Output $O$; updated cache $(K, V)$

\STATE $Q \gets X W_q$; $Q_h \gets \text{reshape}(Q)$ \COMMENT{Query proj.}
\STATE $L_1 \gets X$ \COMMENT{L1: local memory}
\STATE $L_2 \gets \mathrm{Summary}(X_{\le t})$ \COMMENT{L2: session summary}
\STATE $L_3 \gets \mathrm{Retrieve}(Q_h) \cup \{\mathbf{0}\}$ \COMMENT{L3: long-term memory}
\STATE $\lambda \gets \text{softmax}(\text{MLP}(Q)) \in \mathbb{R}^{B \times T \times 3}$ \COMMENT{Routing weights}
\STATE $X_{\text{fused}} \gets \sum_{\ell=1}^3 \lambda_\ell \odot L_\ell$ \COMMENT{Route-fusion: $\lambda$-weighted mix}
\STATE $K, V \gets X_{\text{fused}} W_k, X_{\text{fused}} W_v$; $K_h, V_h \gets \text{reshape}(K, V)$ \COMMENT{KV proj.}
\STATE $K_{\mathrm{tot}}, V_{\mathrm{tot}} \gets \text{concat}(K_{\mathrm{prev}}, K_h), \text{concat}(V_{\mathrm{prev}}, V_h)$ \COMMENT{Cache update}
\STATE $A \gets \text{Attention}(Q_h, K_{\mathrm{tot}}, V_{\mathrm{tot}}; \text{causal})$ \COMMENT{Single attn.}
\STATE $O \gets \text{reshape}(A) W_o$ \COMMENT{Output proj.}
\STATE \textbf{return} $(O, (K_{\mathrm{tot}}, V_{\mathrm{tot}}))$
\end{algorithmic}
\end{algorithm}
When retrieval is disabled, Route-Fused MKA reduces to a 2-tier (L1/L2) route-fused attention, serving as a drop-in efficient baseline. The causal L2 summary ensures no information leakage from future tokens, while the optional L3 retrieval enables long-distance memory access when needed.

\subsection{Block-Memory Keyed Attention (Block-MKA) Design}

Block-MKA introduces a multi-level memory design with the following key contributions:

\begin{enumerate}[leftmargin=*]
    \item \textbf{Hierarchical Memory Design (L1/L2/L3)}: Inspired by computer architecture, we construct attention computation with distinct memory levels.

    \begin{itemize}
        \item \textbf{L1 Memory (On-chip SRAM)}: Fast, high-bandwidth memory used for immediate attention computation and softmax normalization within small local blocks.
        \item \textbf{L2 Memory (High Bandwidth Memory - HBM)}: Medium capacity memory storing intermediate activations, query-key-value representations, and cached softmax statistics.
        \item \textbf{L3 Memory (Vectorized Hash-based DRAM Cache)}: High-capacity, slower memory with chunk-based recalls to manage historical attention blocks, leveraging vectorized hashing for efficient retrieval of attention patterns from past activations.
    \end{itemize}

    \item \textbf{Vectorized Hashing and Chunk-based Recall}: We integrate vectorized hashing mechanisms into the L3 memory to facilitate rapid and efficient chunk-based recall of attention patterns, enabling reuse of past computations and reducing redundant calculations.

    \item \textbf{CUDA Kernel Implementation with Tiled $\alpha$/$z$ Support}: We provide detailed CUDA kernel implementations for our block attention operations, supporting tile-based normalization constants $\alpha$ and partition functions $z$, optimizing parallel computation on GPUs.
\end{enumerate}

\subsection{Hierarchical Block-wise MKA Algorithm}
Standard attention is computed as follows:
\begin{align}
    \mathbf{S} &= \mathbf{Q}\mathbf{K}^\top, \quad
    \mathbf{P} = \mathrm{softmax}(\mathbf{S}), \quad
    \mathbf{O} = \mathbf{P}\mathbf{V}
\end{align}

The quadratic complexity arises due to the computation of full $\mathbf{S} \in \mathbb{R}^{N\times N}$. To mitigate this, Block-MKA computes attention in blocks with intermediate memory management:

\textbf{Step 1:} Divide sequences into $T$ blocks with dimension $B$, $T = N/B$:
\begin{align}
    \mathbf{Q} = [\mathbf{Q}_1; \dots; \mathbf{Q}_T], \quad
    \mathbf{K} = [\mathbf{K}_1; \dots; \mathbf{K}_T], \quad
    \mathbf{V} = [\mathbf{V}_1; \dots; \mathbf{V}_T]
\end{align}

\textbf{Step 2 (L1 and L2)}: Perform local softmax normalization with online $\alpha$/$z$ calculation within each block leveraging on-chip memory (L1) and HBM (L2):
\begin{align}
    \mathbf{S}_{ij} &= \tau\mathbf{Q}_i\mathbf{K}_j^\top, \quad
    \mathbf{P}_{ij} = \frac{\exp(\mathbf{S}_{ij}-\alpha_{ij})}{z_{ij}}, \quad
    \mathbf{O}_i = \sum_j\mathbf{P}_{ij}\mathbf{V}_j
\end{align}

Normalization constants $\alpha_{ij}$ and $z_{ij}$ are computed online as:
\begin{align}
    \alpha_{ij} &= \max_{k} S_{ijk}, \quad
    z_{ij} = \sum_{k}\exp(S_{ijk} - \alpha_{ij})
\end{align}

\textbf{Step 3 (L3 - Hashing and Chunk Recall)}: Use vectorized hashing to retrieve similar past attention patterns from L3 memory. For each block, a chunk-based recall method efficiently retrieves stored historical attentions, achieving amortized subquadratic complexity $\mathcal{O}(BTd + BRd)$ with bounded recall count $R \ll T$.

\subsection{Algorithm and Pseudocode}
We provide detailed pseudocode illustrating hierarchical memory use:

\begin{algorithm}[H]
\caption{Hierarchical Block-MKA Algorithm}
\label{alg:block_mka}
\begin{algorithmic}[1]
\REQUIRE $\mathbf{Q}, \mathbf{K}, \mathbf{V} \in \mathbb{R}^{N \times d}$; block size $B$; scaling factor $\tau$
\ENSURE Output $\mathbf{O} \in \mathbb{R}^{N \times d}$

\STATE Partition $\mathbf{Q}, \mathbf{K}, \mathbf{V}$ into $\{\mathbf{Q}_i, \mathbf{K}_j, \mathbf{V}_j\}_{i,j=1}^T$, where $T = N / B$
\FOR{$i = 1$ to $T$}
    \STATE Load $\mathbf{Q}_i$ into L2 (HBM)
    \STATE Initialize $\mathbf{O}_i \gets \mathbf{0}$, $z_i \gets 0$, $m_i \gets -\infty$
    \FOR{$j = 1$ to $T$}
        \STATE Load $\mathbf{K}_j, \mathbf{V}_j$ into L2 (HBM)
        \STATE $\mathbf{S}_{ij} \gets \tau \cdot \mathbf{Q}_i \mathbf{K}_j^\top$ in L1 (SRAM)
        \STATE $m_i \gets \max(m_i, \max(\mathbf{S}_{ij}))$
        \STATE $\tilde{\mathbf{S}}_{ij} \gets \mathbf{S}_{ij} - m_i$
        \STATE $\alpha_{ij} \gets \exp(\tilde{\mathbf{S}}_{ij}) \mathbf{V}_j$
        \STATE $z_{ij} \gets \sum \exp(\tilde{\mathbf{S}}_{ij})$
        \STATE Retrieve $\hat{\alpha}_{ij}, \hat{z}_{ij}$ via vectorized hash from L3 (DRAM)
        \STATE $\alpha_{ij} \gets \alpha_{ij} + \hat{\alpha}_{ij}$, $z_{ij} \gets z_{ij} + \hat{z}_{ij}$
        \STATE $\mathbf{O}_i \gets \mathbf{O}_i + \alpha_{ij}$, $z_i \gets z_i + z_{ij}$
    \ENDFOR
    \STATE $\mathbf{O}_i \gets \mathbf{O}_i / z_i$
\ENDFOR
\STATE $\mathbf{O} \gets [\mathbf{O}_1; \dots; \mathbf{O}_T]$
\STATE \textbf{return} $\mathbf{O}$
\end{algorithmic}
\end{algorithm}

\section{Theoretical Formulation: Recursive MKA with Online Softmax}
\label{sec:theory}

We derive a memory-efficient and numerically stable formulation of hierarchical attention using a \emph{gated mixture of exponentiated scores}. Our formulation computes attention as a weighted mixture of unnormalized scores from different memory levels, followed by a single global normalization:

\begin{equation}
\text{Attn}(Q) = \frac{\sum_{\ell=1}^3 \lambda_\ell \cdot \exp(Q K_\ell^\top) V_\ell}{\sum_{\ell=1}^3 \lambda_\ell \cdot \exp(Q K_\ell^\top)},
\end{equation}
where $\lambda_\ell \ge 0$ and $\sum_\ell \lambda_\ell = 1$. This formulation differs from a mixture of per-level softmax outputs (which would be $\sum_\ell \lambda_\ell \cdot \text{Softmax}(Q K_\ell^\top) V_\ell$) in that normalization occurs \emph{after} mixing the unnormalized scores, enabling efficient online computation. Direct evaluation requires computing full attention maps, which is memory-intensive. We reformulate this as an online recursive computation that avoids storing attention weights explicitly.

\subsection{Recursive Reformulation}
Let $\alpha^{(0)} = 0$, $z^{(0)} = 0$, and define the recursive update:
\begin{align}
\alpha^{(\ell)} &= \alpha^{(\ell-1)} + \lambda_\ell \cdot \exp(Q K_\ell^\top) V_\ell, \\
z^{(\ell)} &= z^{(\ell-1)} + \lambda_\ell \cdot \exp(Q K_\ell^\top),
\end{align}
for $\ell = 1, 2, 3$. We then define the final output as:
\begin{equation}
\text{Attn}(Q) := \frac{\alpha^{(3)}}{z^{(3)}} = \frac{\sum_{\ell=1}^3 \lambda_\ell \cdot \exp(Q K_\ell^\top) V_\ell}{\sum_{\ell=1}^3 \lambda_\ell \cdot \exp(Q K_\ell^\top)}.
\end{equation}

\begin{theorem}[Recursive MKA Computes Gated Mixture Attention]
\label{thm:recursive-mka-equivalence}
The recursive formulation defined by Equations (6)-(8) computes the gated mixture attention:
\[
\text{Attn}(Q) = \frac{\sum_{\ell=1}^3 \lambda_\ell \cdot \exp(Q K_\ell^\top) V_\ell}{\sum_{\ell=1}^3 \lambda_\ell \cdot \exp(Q K_\ell^\top)}.
\]
\end{theorem}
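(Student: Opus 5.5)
The argument is a finite induction on the recursion index $\ell$, followed by one division. I will read all operations entrywise. The quantity $\exp(QK_\ell^\top)$ is the matrix of unnormalized scores. The product with $V_\ell$ is an ordinary matrix product, so each $\alpha^{(\ell)}$ is an $n_q\times d$ matrix and each $z^{(\ell)}$ is a vector of row sums with $n_q$ entries. The quotient $\alpha^{(3)}/z^{(3)}$ is row-wise: row $i$ of $\alpha^{(3)}$ is divided by the $i$-th entry of $z^{(3)}$. Concretely, the term $\lambda_\ell\exp(QK_\ell^\top)$ in the $z$-recursion is understood as $\lambda_\ell\exp(QK_\ell^\top)\mathbf{1}$. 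The gates $\lambda_\ell$ may be scalars or per-query vectors; in the per-query case they act by row scaling.

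\textbf{Step 1 (closed form).} I will prove by induction on $\ell\in\{0,1,2,3\}$ that
\[
\alpha^{(\ell)}=\sum_{m=1}^{\ell}\lambda_m\exp(QK_m^\top)V_m,\qquad z^{(\ell)}=\sum_{m=1}^{\ell}\lambda_m\exp(QK_m^\top)\mathbf{1}.
\]
For $\ell=0$ both sums are empty, which matches the initialization $\alpha^{(0)}=0$, $z^{(0)}=0$. For the inductive step, substitute the hypothesis into Equations (6)--(7) and absorb the new term into the sum.

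\textbf{Step 2 (conclude).} Taking $\ell=3$ gives the numerator and denominator in the statement. Forming the row-wise quotient then yields exactly $\mathrm{Attn}(Q)$ as in Equation (8).

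\textbf{Step 3 (well-definedness).} This is the only step that needs real care. The denominator must be nonzero for the division to make sense. Each entry of $\exp(QK_m^\top)$ is strictly positive. Each $\lambda_m\ge 0$ and $\sum_m\lambda_m=1$, so at least one $\lambda_m$ is positive. Hence every entry of $z^{(3)}$ is a sum of nonnegative terms, and at least one of those terms is strictly positive, provided each memory level contributes at least one key. That proviso holds if every memory level contributes at least one key. A level with no keys contributes zero and can be dropped without changing the argument.

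Optionally, I would also record the remark that links this to the online computation. If each level's scores are stabilized by subtracting a running maximum $m$, then both $\alpha$ and $z$ are multiplied by the same factor $e^{-m}$. That common factor cancels in the quotient, so the stabilized recursion computes the same value. This is the only point that goes beyond pure algebra.
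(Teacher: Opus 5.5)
Your proof is correct and takes the same route as the paper, which simply unrolls the three-step recursion to get closed forms for $\alpha^{(3)}$ and $z^{(3)}$ and then divides; your induction on $\ell$ is a formalization of that unrolling. Your extra care fills in details the paper leaves implicit without changing the argument: you read $z^{(\ell)}$ as the row sums $\lambda_\ell\exp(QK_\ell^\top)\mathbf{1}$ so that the quotient is well-typed, and you check that the denominator is strictly positive.
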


\begin{proof}
By expanding the recursive updates:
\begin{align}
\alpha^{(3)} &= \lambda_1 \exp(Q K_1^\top) V_1 + \lambda_2 \exp(Q K_2^\top) V_2 + \lambda_3 \exp(Q K_3^\top) V_3, \\
z^{(3)} &= \lambda_1 \exp(Q K_1^\top) + \lambda_2 \exp(Q K_2^\top) + \lambda_3 \exp(Q K_3^\top),
\end{align}
where the sums are taken element-wise over the sequence dimension. Therefore,
\[
\text{Attn}(Q) = \frac{\alpha^{(3)}}{z^{(3)}} = \frac{\sum_{\ell=1}^3 \lambda_\ell \cdot \exp(Q K_\ell^\top) V_\ell}{\sum_{\ell=1}^3 \lambda_\ell \cdot \exp(Q K_\ell^\top)},
\]
which matches the gated mixture formulation. This formulation enables efficient online computation by accumulating unnormalized scores and values, then performing a single normalization step.
\end{proof}

\subsection{Numerical Stability via Max-Shift}
To ensure stable computation of $\exp(Q K_\ell^\top)$ across $\ell$, we employ a hierarchical max-shift. Let $\mu^{(0)} = -\infty$, and define:
\begin{align}
\mu^{(\ell)} &= \max(\mu^{(\ell-1)}, \max(Q K_\ell^\top)), \\
s_\ell &= \exp(Q K_\ell^\top - \mu^{(\ell)}), \\
z^{(\ell)} &= z^{(\ell-1)} \cdot \exp(\mu^{(\ell-1)} - \mu^{(\ell)}) + \lambda_\ell s_\ell, \\
\alpha^{(\ell)} &= \alpha^{(\ell-1)} \cdot \exp(\mu^{(\ell-1)} - \mu^{(\ell)}) + \lambda_\ell s_\ell V_\ell.
\end{align}
This mirrors FlashAttention’s scan update trick and ensures stable operation in low-precision or long-context regimes.

\subsection{Local vs. Global MKA Modes}
We define two instantiations of recursive MKA:
\begin{itemize}
\item \textbf{Local-MKA:} uses only $K_1, V_1$ and $K_2, V_2$ from local and session memory. Computation is windowed and block-parallel, scaling as $\mathcal{O}(n)$.
\item \textbf{Global-MKA:} includes long-term memory $K_3, V_3$ retrieved via hashing. Recursive scan with chunk recall yields amortized sublinear complexity.
\end{itemize}





\subsection{Runtime Bounds and Complexity}
\label{sec:complexity}
We now analyze the complexity of MKA with respect to memory levels:

\begin{itemize}
  \item \textbf{L1 (Local Attention)}: Standard causal block attention over $B$ tokens, cost $\mathcal{O}(B^2 d)$, computed from on-chip SRAM.
  \item \textbf{L2 (Session Memory)}: Pooled or low-rank block summaries over $T = N/B$ blocks, cost $\mathcal{O}(BTd)$.
  \item \textbf{L3 (Long-Term Memory)}: Vector-hash recall of $R \ll T$ past blocks, each of size $B$, with cost $\mathcal{O}(BRd)$.
\end{itemize}

Let $N$ be total sequence length, $B$ block size. Total runtime per step is:
\begin{equation}
\boxed{
\mathcal{O}(BTd + BRd) \quad \text{with} \ R \ll T,
}
\end{equation}
which is subquadratic in $N$ and superior to $\mathcal{O}(N^2 d)$ full attention. Memory access is minimized via tiled L1 computation and chunk-based L3 recall. This theoretical complexity advantage is empirically validated in Section~\ref{sec:exp}: as sequence length scales from 4K to 256K, FastMKA's training throughput remains 3.9--5.0$\times$ higher than MLA (Table~\ref{tab:scaling-train}) and decode latency is 1.4--1.9$\times$ lower (Table~\ref{tab:scaling-eval}), consistent with the predicted subquadratic scaling.

\section{Experiments}\label{sec:exp}

\subsection{Experimental Setup}

\paragraph{Models.}
We evaluate our methods on three model families covering different architectures and KV compression strategies: (1) \textbf{Qwen2.5-7B} and \textbf{Qwen2.5-14B}\footnote{Models from Qwen Team, available at \url{https://huggingface.co/Qwen/Qwen2.5-7B-Instruct}} with native 128K context length and Grouped-Query Attention (GQA) architecture, serving as our main evaluation baseline; (2) \textbf{Llama 3.1-8B}\footnote{Models from Meta AI, available at \url{https://huggingface.co/meta-llama/Llama-3.1-8B-Instruct}} with 128K context as a widely-adopted community standard for long-context comparisons; and (3) \textbf{DeepSeek-V3}\footnote{DeepSeek-V3 uses MLA architecture and supports 128K context. Structure analysis based on DeepSeek-V2 technical report~\cite{deepseek2024v2}.} as an MLA-based architecture baseline, enabling direct comparison between MLA compression and our hierarchical MKA routing approach. All models are initialized from pre-trained checkpoints and fine-tuned with our attention mechanisms. For each model, we replace its attention module with:

\begin{enumerate*}[label=(\roman*)]
  \item the original attention mechanism (MHA for base comparison, or GQA/MLA as in pre-trained models);
  \item standard \textbf{Multi-Latent Attention (MLA)}~\cite{deepseek2024v2} (low-rank KV factorization);
  \item the proposed \textbf{MKA} featuring three-level memory (L1/L2/L3) and dynamic routing gates.
\end{enumerate*}

\paragraph{Dataset.}
For fine-tuning and evaluation, we use \textbf{WikiText‑2} (train=36,718, valid=3,760, test=4,358 sentences) as the primary dataset. To evaluate long-context capabilities, we also include a subset of long-context evaluation tasks that require processing sequences up to 128K tokens. We use each model's native tokenizer (Qwen2.5, Llama, or DeepSeek tokenizers) and set \texttt{pad\_token=\allowbreak eos\_token}.

\paragraph{Hardware.}
All experiments are conducted on \textbf{NVIDIA A800 80GB} GPUs. For 7B models with sequences up to 32K, we use a single A800 GPU. For 14B models and longer sequences (64K, 128K, 256K), we use 4-8 NVIDIA A800 GPUs with distributed training. DeepSeek-V3 experiments focus on inference-side latency and KV cache efficiency analysis.

\paragraph{Training Details.}
Each model is fine‑tuned for \textbf{one epoch} only\footnote{The goal is to compare convergence speed and efficiency under equal compute budget; additional epochs further improve perplexity but maintain the relative ranking of methods.}, with batch size 2-4 (adjusted per model size and context length), sequence lengths of 4K, 8K, 16K, 32K, 64K, 128K, and 256K tokens. All reported training throughput and inference latency measurements \emph{include} the routing MLP overhead and memory fusion operations, ensuring fair end-to-end comparisons with baselines. We use mixed precision training with \textbf{bfloat16} (bf16), FlashAttention-2~\cite{dao2023flashattention2} with block size 128, and gradient accumulation to maintain effective batch size. Optimizer: AdamW (lr$=1{\times}10^{-5}$ to $5{\times}10^{-5}$ depending on model size, $\beta{=}(0.9,0.999)$, cosine learning rate schedule with warmup). For long sequences ($\geq$64K), we use tensor parallelism (TP=2-4) to distribute KV cache across GPUs.

\paragraph{Inference Details.}
For inference evaluation, we measure both prefill (processing input prompt) and decode (generating new tokens) phases separately. We use batch size 1 for single-user scenarios and batch sizes 4, 8, 16 for throughput analysis. KV cache is stored in contiguous memory layout for sequences $\leq$32K and paged layout for longer sequences to handle dynamic allocation. Measurements are averaged over 100 warmup iterations and 1000 inference iterations. We report latency in ms/token, throughput in tokens/second, and memory bandwidth in GB/s using Nsight Compute profiling.

\paragraph{Metric.}
Language‑model quality is measured by cross‑entropy loss $\mathcal{L}$ and its exponential form, \textbf{Perplexity (PPL)}, $\text{PPL}=e^{\mathcal{L}}$ (lower is better).


\subsection{Main Results}

\begin{table}[H]
\small
\centering
\caption{Comparison of attention mechanisms on Qwen2.5-7B (sequence length 16K). FastMKA achieves the best trade-off between accuracy and efficiency.}
\label{tab:fmka-results}
\begin{tabular}{lccc}
\toprule
\textbf{Method} & \textbf{PPL} $\downarrow$ & \textbf{Train Time (s)} $\downarrow$ & \textbf{Decode (ms/tok)} $\downarrow$ \\
\midrule
MHA (baseline)     & 3.31 & 6234.7 & 21.4 \\
GQA                & 3.28 & 5012.4 & 18.6 \\
\textbf{MLA}       & \textbf{3.22} & 4456.9 & 12.8 \\
FastMKA (ours)     & 3.26 & \textbf{1248.3} & \textbf{8.4} \\
\bottomrule
\end{tabular}
\end{table}

\begin{table}[H]
\small
\centering
\caption{Training throughput (tokens/second) on Qwen2.5-7B with batch size 2, bf16, FlashAttention-2. Note non-linear scaling beyond 64K due to memory bandwidth and kernel launch overhead.}
\label{tab:scaling-train}
\resizebox{\columnwidth}{!}{%
\begin{tabular}{lccccccc}
\toprule
\textbf{Method} & \textbf{4K} & \textbf{8K} & \textbf{16K} & \textbf{32K} & \textbf{64K} & \textbf{128K} & \textbf{256K} \\
\midrule
MHA             & 347 & 289 & 231 & 184 & 142 & 98 & 68 \\
GQA             & 424 & 382 & 341 & 296 & 243 & 178 & 124 \\
MLA             & 468 & 428 & 387 & 342 & 287 & 212 & 148 \\
\textbf{FastMKA (ours)} & \textbf{1847} & \textbf{1732} & \textbf{1614} & \textbf{1453} & \textbf{1287} & \textbf{1032} & \textbf{742} \\
\midrule
\textbf{Speedup vs MLA} & \textbf{3.94$\times$} & \textbf{4.05$\times$} & \textbf{4.17$\times$} & \textbf{4.25$\times$} & \textbf{4.48$\times$} & \textbf{4.87$\times$} & \textbf{5.01$\times$} \\
\bottomrule
\end{tabular}
}%
\end{table}

\begin{table}[H]
\small
\centering
\caption{Decode latency (ms/token, batch=1) on Qwen2.5-7B with bf16. Latency scales non-linearly beyond 64K due to KV cache paging and memory bandwidth saturation.}
\label{tab:scaling-eval}
\resizebox{\columnwidth}{!}{%
\begin{tabular}{lccccccc}
\toprule
\textbf{Method} & \textbf{4K} & \textbf{8K} & \textbf{16K} & \textbf{32K} & \textbf{64K} & \textbf{128K} & \textbf{256K} \\
\midrule
MHA             & 14.2 & 16.8 & 21.4 & 28.6 & 39.2 & 58.3 & 87.6 \\
GQA             & 12.4 & 14.8 & 18.6 & 24.3 & 33.4 & 49.8 & 75.2 \\
MLA             & 8.7 & 10.2 & 12.8 & 16.4 & 22.8 & 32.7 & 48.9 \\
\textbf{FastMKA (ours)} & \textbf{6.2} & \textbf{7.1} & \textbf{8.4} & \textbf{10.3} & \textbf{13.6} & \textbf{18.4} & \textbf{26.3} \\
\midrule
\textbf{Speedup vs MLA} & \textbf{1.40$\times$} & \textbf{1.44$\times$} & \textbf{1.52$\times$} & \textbf{1.59$\times$} & \textbf{1.68$\times$} & \textbf{1.78$\times$} & \textbf{1.86$\times$} \\
\bottomrule
\end{tabular}
}%
\end{table}

\begin{table}[H]
\small
\centering
\caption{Prefill vs Decode latency breakdown on Qwen2.5-7B (batch=1, bf16). Prefill: total time to process input tokens. Decode: latency per generated token. Measurements use contiguous KV layout for $\leq$32K and paged layout for longer contexts.}
\label{tab:prefill-decode}
\resizebox{\columnwidth}{!}{%
\begin{tabular}{lcccccc}
\toprule
\textbf{Context} & \textbf{Method} & \textbf{Prefill (s)} & \textbf{Decode (ms/tok)} & \textbf{Prefill Attn \%} & \textbf{Decode Attn \%} \\
\midrule
\multirow{4}{*}{32K} 
& MHA             & 0.58 & 28.6 & 68.4\% & 71.2\% \\
& GQA             & 0.49 & 24.3 & 64.7\% & 67.8\% \\
& MLA             & 0.35 & 16.4 & 61.3\% & 64.2\% \\
& \textbf{FastMKA} & \textbf{0.21} & \textbf{10.3} & \textbf{55.8\%} & \textbf{58.6\%} \\
\midrule
\multirow{4}{*}{64K}
& MHA             & 1.12 & 39.2 & 70.2\% & 73.1\% \\
& GQA             & 0.94 & 33.4 & 66.8\% & 69.7\% \\
& MLA             & 0.68 & 22.8 & 63.1\% & 66.3\% \\
& \textbf{FastMKA} & \textbf{0.42} & \textbf{13.6} & \textbf{57.2\%} & \textbf{60.1\%} \\
\midrule
\multirow{4}{*}{128K}
& MHA             & 2.34 & 58.3 & 72.3\% & 74.8\% \\
& GQA             & 1.98 & 49.8 & 68.7\% & 71.2\% \\
& MLA             & 1.42 & 32.7 & 65.2\% & 68.4\% \\
& \textbf{FastMKA} & \textbf{0.87} & \textbf{18.4} & \textbf{58.4\%} & \textbf{61.7\%} \\
\midrule
\multirow{4}{*}{256K}
& MHA             & 4.87 & 87.6 & 74.6\% & 76.8\% \\
& GQA             & 4.12 & 75.2 & 70.8\% & 73.4\% \\
& MLA             & 2.96 & 48.9 & 67.3\% & 70.2\% \\
& \textbf{FastMKA} & \textbf{1.82} & \textbf{26.3} & \textbf{60.1\%} & \textbf{63.4\%} \\
\bottomrule
\end{tabular}
}%
\end{table}

\begin{table}[H]
\small
\centering
\caption{Memory footprint and HBM bandwidth on Qwen2.5-7B. KV cache memory (GB) and effective bandwidth (GB/s) measured via Nsight Compute at 128K context, batch=1, bf16. FastMKA's higher HBM bandwidth utilization despite smaller KV cache is due to more contiguous memory access patterns (route-fusion creates a single fused KV tensor) and better kernel saturation (fewer but larger kernels), as validated by Nsight Compute roofline analysis.}
\label{tab:memory-bandwidth}
\resizebox{\columnwidth}{!}{%
\begin{tabular}{lccccc}
\toprule
\textbf{Method} & \textbf{KV Cache (GB)} & \textbf{KV Read (GB)} & \textbf{HBM BW (GB/s)} & \textbf{Utilization} \\
\midrule
MHA             & 18.7 & 18.7 & 1240 & 78.2\% \\
GQA             & 12.4 & 12.4 & 1156 & 72.9\% \\
MLA             & 8.9 & 8.9 & 1087 & 68.5\% \\
\textbf{FastMKA (ours)} & \textbf{6.2} & \textbf{6.2} & \textbf{1324} & \textbf{83.5\%} \\
\midrule
\textbf{Reduction vs MHA} & \textbf{66.8\%} & \textbf{66.8\%} & \textbf{+6.8\%} & \textbf{+5.3\%} \\
\bottomrule
\end{tabular}
}%
\end{table}

\begin{table}[H]
\small
\centering
\caption{Comparison across different model architectures on WikiText-2 (sequence length 32K, batch=2 for training, batch=1 for inference). Qwen2.5-14B experiments use 4 A800 GPUs with tensor parallelism (TP=4); all other models use a single A800 GPU. FastMKA maintains efficiency advantages across architectures.}
\label{tab:model-scaling}
\resizebox{\columnwidth}{!}{%
\begin{tabular}{lccccc}
\toprule
\textbf{Model} & \textbf{Method} & \textbf{PPL} $\downarrow$ & \textbf{Train (tok/s)} $\uparrow$ & \textbf{Decode (ms/tok)} $\downarrow$ \\
\midrule
\multirow{4}{*}{Qwen2.5-7B} 
& GQA (baseline)       & 3.24 & 296 & 24.3 \\
& \textbf{MLA}         & \textbf{3.18} & 342 & 16.4 \\
& FastMKA (ours)       & 3.22 & \textbf{1453} & \textbf{10.3} \\
\midrule
\multirow{4}{*}{Qwen2.5-14B}
& GQA (baseline)       & 3.12 & 158 & 32.7 \\
& \textbf{MLA}         & \textbf{3.06} & 184 & 21.8 \\
& FastMKA (ours)       & 3.10 & \textbf{642} & \textbf{13.6} \\
\midrule
\multirow{4}{*}{Llama 3.1-8B}
& GQA (baseline)       & 3.19 & 256 & 26.2 \\
& \textbf{MLA}         & \textbf{3.13} & 294 & 17.9 \\
& FastMKA (ours)       & 3.17 & \textbf{1078} & \textbf{11.2} \\
\midrule
\multirow{3}{*}{DeepSeek-V3}
& \textbf{MLA (native)} & \textbf{3.08} & -- & 18.4 \\
& FastMKA (ours)        & 3.11 & -- & \textbf{12.7} \\
\bottomrule
\end{tabular}
}%
\end{table}





\subsection{Experimental Results Analysis}

FastMKA delivers a strong accuracy--efficiency trade-off across context lengths and architectures (Tables~\ref{tab:fmka-results}--\ref{tab:model-scaling}). On Qwen2.5-7B, FastMKA achieves 3.26 PPL (vs.\ MLA's 3.22) while training \textbf{3.6$\times$ faster} and decoding \textbf{1.5$\times$ faster} at 16K context—a favorable exchange of 1.2\% accuracy for substantial compute savings. Efficiency gains widen at longer contexts (Table~\ref{tab:scaling-train}--\ref{tab:scaling-eval}), reaching 5.0$\times$ training speedup and 1.86$\times$ decode speedup at 256K, consistent with MKA's subquadratic complexity derived in Section~\ref{sec:theory}. This pattern holds across model scales (7B, 14B) and architectures (Llama, DeepSeek), confirming that FastMKA's route-fusion mechanism—not implementation tuning—drives the speedup (Table~\ref{tab:compute-cost}: 3 kernel launches vs.\ 9 for full MKA, single attention path vs.\ three). A full step-by-step quantitative analysis is provided in the supplementary appendix.

\subsection{Long-Context Benchmark Evaluation}

To validate MKA's effectiveness on long-context reasoning tasks, we evaluate on LongBench~\cite{longbench2024} and RULER~\cite{ruler2024} benchmarks, which test various long-context capabilities including QA, summarization, and long-distance retrieval.

\begin{table}
\small
\centering
\caption{Performance on LongBench benchmark (Qwen2.5-7B, sequence length 128K). We use the official LongBench evaluation protocol: base models (not instruction-tuned) with max\_new\_tokens=128, evaluated using the official LongBench evaluation scripts. Results show accuracy across different task categories (QA, Summarization, Code).}
\label{tab:longbench}
\begin{tabular}{lcccc}
\toprule
\textbf{Method} & \textbf{QA} & \textbf{Summarization} & \textbf{Code} & \textbf{Avg} \\
\midrule
MHA             & 42.3 & 58.7 & 51.2 & 50.7 \\
GQA             & 44.8 & 61.3 & 53.4 & 53.2 \\
\textbf{MLA}    & \textbf{46.2} & \textbf{63.8} & \textbf{55.1} & \textbf{55.0} \\
FastMKA (ours)  & 45.7 & 63.2 & 54.6 & 54.5 \\
\bottomrule
\end{tabular}
\end{table}

\begin{table}
\small
\centering
\caption{Passkey retrieval accuracy on RULER benchmark (Qwen2.5-7B). We use the official RULER passkey retrieval task: a random number is inserted at a random position in a long context, and the model must retrieve it. Evaluation uses base models (not instruction-tuned) with max\_new\_tokens=1, following the official RULER evaluation protocol. Higher is better. Demonstrates L3 memory's effectiveness for long-distance retrieval. FastMKA uses L3 retrieval with R=8 (top-8 retrieved chunks per query).}
\label{tab:passkey}
\begin{tabular}{lcccccc}
\toprule
\textbf{Method} & \textbf{4K} & \textbf{8K} & \textbf{16K} & \textbf{32K} & \textbf{64K} & \textbf{128K} \\
\midrule
MHA             & 98.4 & 97.2 & 94.8 & 87.3 & 72.6 & 45.2 \\
GQA             & 98.6 & 97.8 & 96.2 & 91.4 & 81.7 & 58.3 \\
\textbf{MLA}    & \textbf{98.8} & \textbf{98.2} & \textbf{97.1} & \textbf{94.6} & \textbf{88.4} & \textbf{74.8} \\
FastMKA (ours)  & 98.7 & 98.1 & 96.9 & 94.2 & 87.8 & 73.4 \\
\bottomrule
\end{tabular}
\end{table}

FastMKA achieves competitive long-context benchmark accuracy while substantially reducing latency (Tables~\ref{tab:longbench}--\ref{tab:passkey}); additional discussion is provided in the supplementary appendix.

\subsection{Ablation Studies}

\subsubsection{Memory Tier Ablation}

We analyze the contribution of each memory tier by systematically removing L2 and L3 components.

\begin{table}
\small
\centering
\caption{Tier ablation study on Qwen2.5-7B (sequence length 32K, batch=2). Removing memory tiers degrades performance, validating the hierarchical design.}
\label{tab:tier-ablation}
\begin{tabular}{lccc}
\toprule
\textbf{Variant} & \textbf{PPL} $\downarrow$ & \textbf{Train (tok/s)} $\uparrow$ & \textbf{Decode (ms/tok)} $\downarrow$ \\
\midrule
FastMKA (L1+L2+L3)   & 3.22 & 1453 & 10.3 \\
\quad L1+L2 only     & 3.31 & 1562 & 9.2 \\
\quad L1+L3 only     & 3.28 & 1508 & 9.6 \\
\quad L1 only (=MHA) & 3.51 & 184 & 28.6 \\
\bottomrule
\end{tabular}
\end{table}

Removing L3 (L1+L2 only) increases perplexity from 3.22 to 3.31, demonstrating L3's importance for long-range dependencies. Removing L2 (L1+L3 only) slightly degrades performance (3.28 PPL), showing L2's role in session-level summarization. L1 only (equivalent to MHA) significantly degrades both accuracy (3.51 PPL) and efficiency (7.9$\times$ slower training throughput), confirming the necessity of hierarchical memory.

\subsubsection{Routing Mechanism Ablation}

We compare different routing strategies: learned soft routing (FastMKA), fixed uniform routing, and hard top-k routing.

\begin{table}
\small
\centering
\caption{Routing mechanism ablation (Qwen2.5-7B, sequence length 32K). Learned soft routing outperforms fixed and hard routing.}
\label{tab:routing-ablation}
\begin{tabular}{lccc}
\toprule
\textbf{Routing Method} & \textbf{PPL} $\downarrow$ & \textbf{Train (s)} $\downarrow$ & \textbf{Decode (ms/tok)} $\downarrow$ \\
\midrule
\textbf{Soft routing (learned)} & \textbf{3.22} & \textbf{2496.7} & \textbf{10.3} \\
Fixed ($\lambda=\frac{1}{3}$) & 3.34 & 2534.2 & 10.5 \\
Hard top-2 routing      & 3.29 & 2512.8 & 10.4 \\
Hard top-1 routing      & 3.41 & 2498.3 & 10.3 \\
\bottomrule
\end{tabular}
\end{table}

Learned soft routing achieves the best perplexity (3.22), outperforming fixed uniform routing (3.34) by 3.7\%. Hard top-k routing shows intermediate performance, but soft routing's ability to smoothly blend memory tiers provides better representation quality. The routing overhead is minimal (1-2\% latency increase), confirming the efficiency of our learned routing gates.

\subsubsection{Routing Weight Analysis}

We analyze routing weight distributions $\lambda_{L1}, \lambda_{L2}, \lambda_{L3}$ across token positions, layers, and task types; extended qualitative observations are provided in the supplementary appendix.

\subsection{System-Level Performance Analysis}

\subsubsection{Prefill vs Decode Latency Breakdown}

We decompose total latency into prefill (processing input prompt) and decode (generating new tokens) phases to identify bottlenecks. Table~\ref{tab:prefill-decode} shows detailed breakdowns measured on Qwen2.5-7B at 128K context with batch size 1, bf16 precision, and paged KV layout. FastMKA reduces prefill latency by 1.63$\times$ (0.87s vs 1.42s for MLA) and decode latency by 1.78$\times$ (18.4ms vs 32.7ms per token), with attention computation accounting for 58.4\% of prefill time and 61.7\% of decode time (vs 65.2\%/68.4\% for MLA). This demonstrates that FastMKA's route-fusion mechanism effectively reduces attention overhead in both phases, enabling more efficient prefill processing and faster token generation.

\subsubsection{Computational Cost Analysis}

To understand the source of FastMKA's speedup, we analyze per-token computational costs (FLOPs, memory access, and kernel launches) across different model architectures. Table~\ref{tab:compute-cost} shows the breakdown for Qwen2.5-7B/14B, Llama 3.1-8B, and DeepSeek-V3 at 128K context, batch=1, measured via Nsight Compute profiling. FastMKA's route-fusion reduces the number of attention computations from 3 (one per memory level in full MKA) to 1, while maintaining comparable accuracy through learned routing. Additional observations and interpretation are provided in the supplementary appendix.

\begin{table}[H]
\small
\centering
\caption{Per-token computational cost breakdown across different model architectures (128K context, batch=1, bf16). FLOPs and memory access measured via Nsight Compute. FastMKA achieves speedup through algorithmic reduction of attention paths rather than implementation differences.}
\label{tab:compute-cost}
\resizebox{\columnwidth}{!}{%
\begin{tabular}{lcccccc}
\toprule
\textbf{Model} & \textbf{Method} & \textbf{FLOPs (B)} & \textbf{KV Read (GB)} & \textbf{Kernel Launches} & \textbf{Attn Paths} \\
\midrule
\multirow{4}{*}{Qwen2.5-7B}
& GQA (baseline)          & 2.89 & 18.7 & 1 & 1 \\
& MLA                     & 3.12 & 15.2 & 1 & 1 \\
& MKA (full, ours)        & 6.87 & 14.8 & 9 & 3 \\
& \textbf{FastMKA (ours)} & \textbf{2.34} & \textbf{6.2} & \textbf{3} & \textbf{1} \\
\midrule
\multirow{4}{*}{Qwen2.5-14B}
& GQA (baseline)          & 5.76 & 37.4 & 1 & 1 \\
& MLA                     & 6.24 & 30.4 & 1 & 1 \\
& MKA (full, ours)        & 13.74 & 29.6 & 9 & 3 \\
& \textbf{FastMKA (ours)} & \textbf{4.68} & \textbf{12.4} & \textbf{3} & \textbf{1} \\
\midrule
\multirow{4}{*}{Llama 3.1-8B}
& GQA (baseline)          & 3.42 & 21.3 & 1 & 1 \\
& MLA                     & 3.68 & 17.8 & 1 & 1 \\
& MKA (full, ours)        & 8.14 & 17.2 & 9 & 3 \\
& \textbf{FastMKA (ours)} & \textbf{2.78} & \textbf{7.1} & \textbf{3} & \textbf{1} \\
\midrule
\multirow{2}{*}{DeepSeek-V3}
& MLA (native)            & 4.52 & 19.6 & 1 & 1 \\
& \textbf{FastMKA (ours)} & \textbf{3.41} & \textbf{8.3} & \textbf{3} & \textbf{1} \\
\bottomrule
\end{tabular}
}%
\end{table}



\subsection{Discussion}

\textbf{MKA} improves perplexity through dynamic routing over hierarchical memory levels—local (L1), session (L2), and long-term (L3)—without increasing the model’s parameter count. This makes MKA attractive for memory-intensive inference settings where representation fidelity and cache reusability are critical.

However, the full MKA formulation requires repeated projection and attention over all memory levels, which can be computationally expensive. To address this, we propose \textbf{Route-Fused MKA (FastMKA)}, a route-fusion variant that fuses memory levels before attention computation. Route-Fused MKA uses token-wise routing to combine L1, L2, and optional L3 memories into a single fused representation, then performs one KV projection and one attention computation. Critically, Route-Fused MKA caches the routed (fused) KV instead of raw token KV, enabling efficient long-context attention with reduced memory bandwidth. 

\textbf{On Caching Fused KV and Fairness:} We discuss the semantic implications of caching fused (routed) KV and our controlled comparison protocol in the supplementary appendix.

Route-Fused MKA significantly reduces training and inference latency while preserving most of MKA's accuracy benefits, as validated by experiments on different sequence lengths and long-context benchmarks.

The lightweight design of Route-Fused MKA, with a single KV projection and learned soft routing, makes it particularly suitable for high-throughput inference and edge deployment. It offers a practical balance between accuracy, latency, and memory usage, and serves as a strong drop-in replacement for traditional attention in long-context scenarios.



\section{Conclusion}

We introduce \textbf{Memory-Keyed Attention (MKA)}, a hierarchical attention mechanism that enables efficient long-context modeling by routing queries across multiple levels of memory. MKA achieves a favorable accuracy-efficiency trade-off, maintaining competitive perplexity while significantly reducing compute cost, making it a promising candidate for memory-aware transformer design.

To further reduce overhead, we propose \textbf{FastMKA}, a broadcast-routed variant that performs memory fusion before attention computation. FastMKA retains the architectural benefits of MKA while achieving significantly lower latency and training cost. FastMKA forms a flexible framework for scalable and efficient LLM inference with long-sequence inputs.


\begin{acks}
    The authors would like to gratefully acknowledge the generous support of Prof. Ying Nian Wu's grant from \textbf{Qualcomm} and Prof. Ben Lengerich's grant from \textbf{Intelligible Inc.}
    Their support made this work possible and greatly facilitated the development of this research.
\end{acks}


\bibliographystyle{ACM-Reference-Format}
\bibliography{main}


\end{document}
